\newtheorem{theorem}{Theorem}
\newtheorem{corollary}{Corollary}
\theoremstyle{definition}
\newtheorem{example}{Example}
\newcommand{\abs}[1]{\lvert#1\rvert} 
\newcommand{\near}{\delta} 
\newcommand{\dnear}{\delta_{\Phi}} 
\newcommand{\dcap}{\mathop{\cap}\limits_{\Phi}} 
\newcommand{\dfar}{{\not\delta}_{\Phi}} 
\renewcommand{\thesubfigure}{\thefigure.\arabic{subfigure}}
\renewcommand{\p@subfigure}{}
\renewcommand{\@thesubfigure}{\thesubfigure:\hskip\subfiglabelskip}
\begin{document} 

\title{Proximal Groupoid Patterns In Digital Images}

\author[E. A-iyeh]{E. A-iyeh$^{\beta}$}
\email{umaiyeh@myumanitoba.ca}
\address{\llap{$^{\beta}$\,} Computational Intelligence Laboratory,
University of Manitoba, WPG, MB, R3T 5V6, Canada}

\author[J.F. Peters]{J.F. Peters$^{\alpha}$}
\email{James.Peters3@umanitoba.ca, einan@adiyaman.edu.tr}
\address{\llap{$^{\alpha}$\,}Computational Intelligence Laboratory,
University of Manitoba, WPG, MB, R3T 5V6, Canada and
Department of Mathematics, Faculty of Arts and Sciences, Ad\.{i}yaman University, 02040 Ad\.{i}yaman, Turkey}

\thanks{The research has been supported by the Scientific and Technological Research
Council of Turkey (T\"{U}B\.{I}TAK) Scientific Human Resources Development
(BIDEB) under grant no: 2221-1059B211402463 and the Natural Sciences \&
Engineering Research Council of Canada (NSERC) discovery grant 185986.}

\subjclass[2010]{Primary 57M50,54E05; Secondary 54E40, 00A69}

\date{}

\dedicatory{Dedicated to the Memory of Som Naimpally}

\begin{abstract}The focus of this article is on the detection and classification of patterns based on groupoids. The approach hinges on descriptive proximity of points in a set based on the neighborliness property. This approach lends support to image analysis and understanding and in studying nearness of image segments.  A practical application of the approach is in terms of the analysis of natural images for pattern identification and classification.
\end{abstract} 

\keywords{Description, Groupoid, neighborliness, Patterns, Proximity}

\maketitle

\section{Introduction}
The focus of this paper is on groupoids in proximity spaces, which is an outgrowth of recent research~\cite{PIO2014gmn}.  A \emph{groupoid} is a system $S\left(\circ \right)$ consisting of a nonempty set $S$ together with a binary operation $\circ$ on $S$.  A binary operation $\circ:S\times S\rightarrow S$ maps $S\times S$ into $S$, where $S\times S$ is the set of all ordered pairs of elements of $S$.

Groupoids can either be spatially near in a traditional Lodato proximity space~\cite{Lodato1962,Lodato1964,Lodato1966} (see, also,~\cite{Naimpally2009,Naimpally1970,Peters2016isrl102compProximity}) or descriptively near in descriptive proximity space.
A descriptive proximity space~\cite{Peters2012ams,Peters2013mcsintro} is an extension of an Lodato proximity space.  This extension is made possible by the introduction of feature vectors that describe each point in a proximity space.  Sets $A,B$ in an EF-proximity space $X$ are near, provided $A\cap B\neq \emptyset$.  Sets $A,B$ in a descriptive proximity space $X$ are near, provided there is at least one pair of points $a\in A, b\in B$ with matching descriptions. 
 
In proximity spaces, groupoids provide a natural structuring of sets of points.  For a proximity space $X$, the structuring of $X$ results from a proximity relation on $X$.  Nonempty subsets $A$ of $X$ are further structured as a result of binary operations $\circ$ on $A$, yielding a groupoid denoted by $(A,\circ)$.    
These operations often lead to descriptions of the sets endowed for structures with potential applications in proximity spaces for pattern description, recognition and identification. In this work, the applications of descriptive proximity-based groupoids in digital image analysis is explored.  A practical application of the proposed approach is in terms of the analysis of natural images for pattern identification and classification.


\section{Preliminaries}
In a Kov\'{a}r discrete space, a non-abstract point has a location and features that can be measured~\cite[\S 3]{Kovar2011}. Let $X$ be a nonempty set of non-abstract points in a proximal relator space $(X,\mathcal{R_{\delta}})$ and let $\Phi = \left\{\phi_1,\dots,\phi_n\right\}$ a set of probe functions that represent features of each $x\in X$.    For example, leads to a proximal view of sets of picture points in digital images~\cite{Peters2013mcsintro}. A \emph{probe function} $\Phi:X\rightarrow \mathbb{R}$ represents a feature of a sample point in a picture.  Let $\Phi(x) = (\phi_1(x),\dots,\phi_n(x))$ denote a feature vector for $x$, which provides a description of each $x\in X$.  To obtain a descriptive proximity relation (denoted by $\delta_{\Phi}$), one first chooses a set of probe functions.   Let $A,B\in 2^X$ and $\mathcal{Q}(A),\mathcal{Q}(B)$ denote sets of descriptions of points in $A,B$, respectively.  For example, $\mathcal{Q}(A) = \left\{\Phi(a): a\in A\right\}$.
Let $X$ be a nonempty set. A \textit{Lodato proximity $\delta$} is a relation on $\mathscr{P}(X)$ which satisfies the following axioms for all subsets $A, B, C $ of $X$:\\
\begin{description}
\item[{\rm\bf (P0)}] $\emptyset \not\delta A, \forall A \subset X $.
\item[{\rm\bf (P1)}] $A\ \delta\ B \Leftrightarrow B \delta A$.
\item[{\rm\bf (P2)}] $A\ \cap\ B \neq \emptyset \Rightarrow A \near B$.
\item[{\rm\bf (P3)}] $A\ \delta\ (B \cup C) \Leftrightarrow A\ \delta\ B $ or $A\ \delta\ C$.
\item[{\rm\bf (P4)}] $A\ \delta\ B$ and $\{b\}\ \delta\ C$ for each $b \in B \ \Rightarrow A\ \delta\ C$. \qquad \textcolor{blue}{$\blacksquare$}
\end{description}
\vspace{3mm}
\emph{Descriptive Lodato proximity} was introduced in~\cite{DBLP:series/isrl/2014-63}.  Let $X$ be a nonempty set, $A,B\subset X, x\in X$.  Recall that $\Phi(x)$ is a feature vector that describes $x$, $\Phi(A)$ is the set of feature vectors that describe points in $A$.  The descriptive intersection between $A$ and $B$ (denoted by $A\ \dcap\ B$) is defined by
\[
A\ \dcap\ B = \left\{x\in A\cup B: \Phi(x)\in\Phi(A)\ \mbox{and}\ \Phi(x)\in\Phi(B)\right\}.
\]
A \textit{Descriptive Lodato proximity $\dnear$} is a relation on $\mathscr{P}(X)$ which satisfies the following axioms for all subsets $A, B, C $ of $X$, which is a descriptive Lodato proximity space:\\
\begin{description}
\item[{\rm\bf (dP0)}] $\emptyset\ \dfar\ A, \forall A \subset X $.
\item[{\rm\bf (dP1)}] $A\ \dnear\ B \Leftrightarrow B\ \dnear\ A$.
\item[{\rm\bf (dP2)}] $A\ \dcap\ B \neq \emptyset \Rightarrow\ A\ \dnear\ B$.
\item[{\rm\bf (dP3)}] $A\ \dnear\ (B \cup C) \Leftrightarrow A\ \dnear\ B $ or $A\ \dnear\ C$.
\item[{\rm\bf (dP4)}] $A\ \dnear\ B$ and $\{b\}\ \dnear\ C$ for each $b \in B \ \Rightarrow A\ \dnear\ C$. \qquad \textcolor{blue}{$\blacksquare$}
\end{description}
\vspace{3mm}
The expression $A\  \delta_{\Phi}\ B$ reads $A\ \mbox{\emph{is descriptively near}}\ B$.  Similarly, $A\  \underline{\delta}_{\Phi}\ B$ reads $A\ \mbox{\emph{is descriptively far from}}\ B$.  The descriptive proximity of $A$ and $B$ is defined by
\[
A\ \dcap\ B \neq \emptyset \Rightarrow\ A\ \dnear\ B.
\] 
That is, nonempty descriptive intersection of $A$ and $B$ implies the descriptive proximity of $A$ and $B$.  Similarly, the Lodato proximity of $A,B\subset X$ (denoted $A\ \delta\ B$) is defined by
\[
A\ \cap\ B \neq \emptyset \Rightarrow A\ \near\ B.
\] 
In an ordinary metric closure space~\cite[\S 14A.1]{Cech1966} $X$, the closure of $A\subset X$ (denoted by $\mbox{cl}(A)$) is defined by
\begin{align*}
\mbox{cl}(A) &= \left\{x\in X: D(x,A)=0\right\},\ \mbox{where}\\
D(x,A) &= inf\left\{d(x,a): a\in A\right\},
\end{align*}
{\em i.e.}, $\mbox{cl}(A)$ is the set of all points $x$ in $X$ that are close to $A$ ($D(x,A)$ is the Hausdorff distance \cite[\S 22, p. 128]{Hausdorff1914} between $x$ and the set $A$ and $d(x,a)=\left|x-a\right|$ (standard distance)).  Subsets $A,B\in 2^X$ are spatially near (denoted by $A\ \delta\ B$), provided the intersection of closure of $A$ and the closure of $B$ is nonempty, i.e., $\mbox{cl}(A)\cap\mbox{cl}(B)\neq \emptyset$.  That is, nonempty sets are spatially near, provided the sets have at least one point in common. 

\noindent The Lodato proximity relation $\delta$ (called a \emph{discrete} proximity) is defined by 
\[
\delta = \left\{(A,B)\in 2^X\times 2^X: \mbox{cl}(A)\ \cap\ \mbox{cl}(B)\neq\emptyset\right\}.
\]
The pair $(X,\delta)$ is called an EF-proximity space.  In a proximity space $X$, the closure of $A$ in $X$ coincides with the intersection of all closed sets that contain $A$.  

\begin{theorem}\label{thm:cl}{\rm \cite{Smirnov1952}}
The closure of any set $A$ in the proximity space $X$ is the set of points $x\in X$ that are close to $A$.
\end{theorem}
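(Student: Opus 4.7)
The plan is to establish the set equality $\mbox{cl}(A) = \{x \in X : \{x\}\ \delta\ A\}$ directly from the characterization of $\delta$ provided in the preliminaries, namely $A\ \delta\ B$ iff $\mbox{cl}(A) \cap \mbox{cl}(B) \neq \emptyset$. Since the theorem asserts a set equality, I would verify the two inclusions in turn, reading ``$x$ is close to $A$'' as the single-point proximity assertion $\{x\}\ \delta\ A$.

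For the inclusion $\mbox{cl}(A) \subseteq \{x : \{x\}\ \delta\ A\}$, I would take any $x \in \mbox{cl}(A)$ and note that $x \in \mbox{cl}(\{x\})$ holds trivially while $x \in \mbox{cl}(A)$ holds by hypothesis. Hence $\mbox{cl}(\{x\}) \cap \mbox{cl}(A) \neq \emptyset$, and unfolding the defining identity of $\delta$ yields $\{x\}\ \delta\ A$, which is precisely the statement that $x$ is close to $A$. For the converse, starting from $\{x\}\ \delta\ A$, one obtains $\mbox{cl}(\{x\}) \cap \mbox{cl}(A) \neq \emptyset$; the key step is then to identify $\mbox{cl}(\{x\})$ with $\{x\}$. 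In the metric closure space of the excerpt this is immediate, since $D(y,\{x\}) = d(y,x) = 0$ forces $y = x$, so $\{x\} \cap \mbox{cl}(A) \neq \emptyset$ and therefore $x \in \mbox{cl}(A)$.

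I expect the only genuinely subtle step to be the identity $\mbox{cl}(\{x\}) = \{x\}$: this rests on the $T_1$-style separation inherent to the metric setting of the excerpt, and in a more abstract EF-proximity space it would require either an explicit separation hypothesis or a reworking via the standard Smirnov correspondence between closure and proximal neighborhoods. Within the framework of the excerpt, however, no extra argument is needed, and the theorem reduces to an essentially definitional unpacking of $\delta$, with axioms \textbf{(P0)}--\textbf{(P4)} serving only as background consistency conditions rather than active ingredients of the proof.
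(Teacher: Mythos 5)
Your proposal is correct, but it is worth noting that the paper does not actually prove this statement at all: Theorem~\ref{thm:cl} is imported wholesale from Smirnov's work via the citation, and the only argument in the paper is the one-line corollary that merely restates it as $\mbox{cl}(A) = \{x\in X: \{x\}\ \delta\ A\}$. Your blind attempt therefore supplies something the paper omits, namely a direct verification inside the concrete metric-closure framework the preliminaries set up. Both inclusions are handled properly: the forward direction is the trivial observation that $x\in\mbox{cl}(\{x\})\cap\mbox{cl}(A)$, and the reverse direction correctly isolates the only nontrivial step, the identification $\mbox{cl}(\{x\})=\{x\}$, which you justify from $D(y,\{x\})=d(y,x)=0\Rightarrow y=x$ in the metric setting. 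Your closing remark is also the right caveat: in an abstract EF- or Lodato proximity space (Smirnov's actual setting), the closure is not given in advance by a metric but is \emph{defined} by $\mbox{cl}(A)=\{x:\{x\}\ \delta\ A\}$, and the substantive content of the theorem is that this operator satisfies the Kuratowski closure axioms and that the separation needed for $\mbox{cl}(\{x\})=\{x\}$ must be assumed rather than derived. So the trade-off is: the paper buys generality by outsourcing the proof to Smirnov, while your argument buys self-containedness at the cost of being valid only in the metric-closure instance the paper actually works with — which, for the purposes of this paper's applications to digital images, is entirely sufficient.
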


\begin{corollary}
The closure of any set $A$ in the proximity space $X$ is the set of points $x\in X$ such that $x\ \delta\ A$.
\end{corollary}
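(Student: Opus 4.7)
The plan is to reformulate the conclusion so that it becomes a direct consequence of Theorem~\ref{thm:cl}. The corollary differs from that theorem only by replacing the predicate ``$x$ is close to $A$'' with ``$x\ \delta\ A$'' (where $x$ is identified with the singleton $\{x\}$), so I would prove the set equality $\mbox{cl}(A)=\left\{x\in X: x\ \delta\ A\right\}$ by establishing the equivalence of these two predicates and then appealing to Theorem~\ref{thm:cl}.

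For the forward inclusion, suppose $x \in \mbox{cl}(A)$. Trivially $x \in \mbox{cl}(\{x\})$ (since $\{x\}\subseteq \mbox{cl}(\{x\})$ by axiom (P2)), and by assumption $x \in \mbox{cl}(A)$, so $x \in \mbox{cl}(\{x\}) \cap \mbox{cl}(A)$; in particular this intersection is nonempty. By the definition of the discrete Lodato proximity $\delta$ displayed just before Theorem~\ref{thm:cl}, this yields $\{x\}\ \delta\ A$, which is the abbreviation $x\ \delta\ A$.

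For the reverse inclusion, suppose $x\ \delta\ A$, so $\mbox{cl}(\{x\}) \cap \mbox{cl}(A) \neq \emptyset$. In the metric closure setting invoked in the preceding paragraphs (where $d(x,a)=\left|x-a\right|$), singletons are closed, so $\mbox{cl}(\{x\})=\{x\}$, which forces $x \in \mbox{cl}(A)$. Theorem~\ref{thm:cl} then tells us that $x$ is close to $A$, so $x$ lies in the set on the right-hand side of the claimed equality.

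The only real obstacle is notational bookkeeping: one must fix the convention that $x\ \delta\ A$ in the corollary is read as the natural abbreviation of $\{x\}\ \delta\ A$, since $\delta$ is formally a relation on pairs of subsets of $X$. Once this identification is made, the corollary reduces to a one-line combination of Theorem~\ref{thm:cl} with the displayed definition of $\delta$, and no appeal to axioms (P3) or (P4) is required.
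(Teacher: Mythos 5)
Your proposal is correct and follows essentially the same route as the paper: the paper's own proof is the one-line observation that Theorem~\ref{thm:cl} already says $\mbox{cl}(A)=\left\{x\in X:\left\{x\right\}\ \delta\ A\right\}$ once ``close to'' is read as $\delta$, and your argument simply unwinds that identification through the displayed definition of the discrete proximity and the metric closure. The extra verification (including the remark that $\mbox{cl}(\left\{x\right\})=\left\{x\right\}$ in the metric setting) is sound, though the containment $\left\{x\right\}\subseteq\mbox{cl}(\left\{x\right\})$ is really a property of the closure operator rather than of axiom (P2).
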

\begin{proof}
From Theorem~\ref{thm:cl}, $\mbox{cl}(A) = \left\{x\in X: \left\{x\right\}\ \delta\ A\right\}$.
\end{proof}

The expression $A\  \delta_{\Phi}\ B$ reads $A\ \mbox{\emph{is descriptively near}}\ B$. The relation $\delta_{\Phi}$ is called a \emph{descriptive proximity relation}.  Similarly, $A\  \underline{\delta}_{\Phi}\ B$ denotes that $A$ is descriptively far (remote) from $B$.  The descriptive proximity of $A$ and $B$ is defined by
\[
A\ \delta_{\Phi}\ B \Leftrightarrow \mathcal{Q}(\mbox{cl}(A)) \cap \mathcal{Q}(\mbox{cl}(B)) \neq \emptyset.
\] 
\noindent 

\noindent The \emph{descriptive intersection} $\mathop{\cap}\limits_{\Phi}$ of $A$ and $B$ is defined by
\[
A\ \mathop{\cap}\limits_{\Phi}\ B = \left\{x\in A\cup B:\Phi(x)\in \mathcal{Q}(\mbox{cl}(A))\ \mbox{and}\ \Phi(x)\in \mathcal{Q}(\mbox{cl}(B))\right\}.
\]
That is, $x\in A\cup B$ is in $\mbox{cl}(A)\ \mathop{\cap}\limits_{\Phi}\ \mbox{cl}(B)$, provided $\Phi(x) = \Phi(a) = \Phi(b)$ for some $a\in \mbox{cl}(A), b\in \mbox{cl}(B)$.

\noindent 
The descriptive proximity relation $\delta_{\Phi}$ is defined by
\[
\delta_{\Phi} = \left\{(A,B)\in 2^X\times 2^X:
                       \mbox{cl}(A)\ \mathop{\cap}\limits_{\Phi}\ \mbox{cl}(B)\neq\emptyset\right\}.
\]

The pair $(X,\delta_{\Phi})$ is called a descriptive Lodato proximity space.  In a descriptive proximity space $X$, the descriptive closure of $A$ in $X$ contains all points in $X$ that are descriptively close to the closure of $A$.  Let $\delta_{\Phi}(A,B) = 0$ indicate that $A$ is descriptively close to $B$. The \emph{ descriptive closure of a set} $A$ (denoted by $\mbox{cl}_{\Phi}(A)$) is defined by
\[
\mbox{cl}_{\Phi}(A) = \left\{x\in X: \Phi(x) \in \mathcal{Q}(\mbox{cl}(A))\right\}.
\]
That is, $x\in X$ is in the descriptive closure of $A$, provided $\Phi(x)$ (description of $x$) matches $\Phi(a)\in \mathcal{Q}(\mbox{cl}(A))$ for at least one $a\in \mbox{cl}(A)$.

\begin{theorem}\label{thm:dcl}{\rm \cite{DBLP:series/isrl/2014-63}}
The descriptive closure of any set $A$ in the descriptive proximity space $X$ is the set of points $x\in X$ that are descriptively close to $A$.
\end{theorem}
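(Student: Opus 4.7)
The plan is to establish Theorem~\ref{thm:dcl} by a direct double-inclusion argument, mirroring the strategy used for the spatial version in the corollary to Theorem~\ref{thm:cl}. Writing $C_{\Phi}(A) := \{x\in X : \{x\}\ \dnear\ A\}$ for the ``set of points descriptively close to $A$,'' the aim is to prove $\mbox{cl}_{\Phi}(A) = C_{\Phi}(A)$, with each side understood purely via the definitions already recorded in the preliminaries.

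For the inclusion $\mbox{cl}_{\Phi}(A)\subseteq C_{\Phi}(A)$, I would start from a point $x\in\mbox{cl}_{\Phi}(A)$, so that by definition $\Phi(x)\in\mathcal{Q}(\mbox{cl}(A))$, meaning there is some $a\in\mbox{cl}(A)$ with $\Phi(a)=\Phi(x)$. Since $x\in\mbox{cl}(\{x\})$, we also have $\Phi(x)\in\mathcal{Q}(\mbox{cl}(\{x\}))$. Hence $x$ itself lies in $\{x\}\cup A$ and witnesses $\Phi(x)\in\mathcal{Q}(\mbox{cl}(\{x\}))\cap \mathcal{Q}(\mbox{cl}(A))$, so $\mbox{cl}(\{x\})\ \dcap\ \mbox{cl}(A)\neq\emptyset$, which by the definition of $\dnear$ yields $\{x\}\ \dnear\ A$, i.e.\ $x\in C_{\Phi}(A)$.

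For the reverse inclusion $C_{\Phi}(A)\subseteq\mbox{cl}_{\Phi}(A)$, suppose $\{x\}\ \dnear\ A$. Unwinding the definition of $\dnear$, there exists $y\in\{x\}\cup A$ with $\Phi(y)\in\mathcal{Q}(\mbox{cl}(\{x\}))$ and $\Phi(y)\in\mathcal{Q}(\mbox{cl}(A))$. The first membership gives $\Phi(y)=\Phi(x')$ for some $x'\in\mbox{cl}(\{x\})$; appealing to the corollary of Theorem~\ref{thm:cl}, we have $x'\ \delta\ \{x\}$, and in the setting where singletons are closed (as the paper implicitly assumes via the discrete Kov\'{a}r space) this forces $\Phi(y)=\Phi(x)$. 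Combined with $\Phi(y)\in\mathcal{Q}(\mbox{cl}(A))$, this gives $\Phi(x)\in\mathcal{Q}(\mbox{cl}(A))$, which is exactly the condition $x\in\mbox{cl}_{\Phi}(A)$.

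I expect the main obstacle to be the reverse inclusion, specifically the step identifying $\Phi(y)$ with $\Phi(x)$ from membership in $\mathcal{Q}(\mbox{cl}(\{x\}))$: this requires either the tacit $T_1$-type assumption that $\mbox{cl}(\{x\})=\{x\}$ or, more carefully, invoking the corollary to Theorem~\ref{thm:cl} to say that every point in $\mbox{cl}(\{x\})$ is spatially close to $\{x\}$ and then arguing that their feature vectors coincide under $\Phi$ when matched through the descriptive intersection. Everything else is a direct unfolding of the definitions of $\dcap$, $\dnear$, $\mathcal{Q}(\cdot)$, and $\mbox{cl}_{\Phi}(\cdot)$.
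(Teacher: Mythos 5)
The paper does not actually prove Theorem~\ref{thm:dcl}: it is imported by citation from \cite{DBLP:series/isrl/2014-63}, and the only argument in the text is the one-line corollary that merely restates it as $\mbox{cl}_{\Phi}(A) = \{x\in X: \{x\}\ \delta_{\Phi}\ A\}$. So there is no in-paper proof to match against; your double-inclusion argument supplies the missing content, and it is essentially correct given the paper's concrete definition of $\delta_{\Phi}$ as the relation $\{(A,B): \mbox{cl}(A)\ \dcap\ \mbox{cl}(B)\neq\emptyset\}$ (the biconditional form, not merely axiom (dP2), which only gives one direction). Your forward inclusion is airtight: $x\in\mbox{cl}(\{x\})$ always, so $\Phi(x)$ witnesses the nonempty intersection of $\mathcal{Q}(\mbox{cl}(\{x\}))$ and $\mathcal{Q}(\mbox{cl}(A))$. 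You are also right that the reverse inclusion is where the real issue sits: from $\mathcal{Q}(\mbox{cl}(\{x\}))\cap\mathcal{Q}(\mbox{cl}(A))\neq\emptyset$ one only gets $\Phi(x')\in\mathcal{Q}(\mbox{cl}(A))$ for some $x'\in\mbox{cl}(\{x\})$, and passing from $\Phi(x')$ to $\Phi(x)$ genuinely requires $\mbox{cl}(\{x\})=\{x\}$ or at least that every point of $\mbox{cl}(\{x\})$ shares the description of $x$; neither is stated as a hypothesis anywhere in the paper, though both hold trivially in the discrete digital-image setting the paper works in, and in the cited source the descriptive closure is defined directly in terms of $\Phi(x)$ rather than $\mathcal{Q}(\mbox{cl}(\{x\}))$, which dissolves the issue. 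In short: your proof is sound, honestly flags its one tacit assumption, and does more work than the paper, which proves nothing here.
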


\begin{corollary}
The description closure of any set $A$ in the descriptive proximity space $X$ is the set of points $x\in X$ such that $x\ \delta_{\Phi}\ A$.
\end{corollary}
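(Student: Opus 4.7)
The plan is to mirror the proof of the earlier corollary (the one obtained from Theorem~\ref{thm:cl}): invoke Theorem~\ref{thm:dcl} and then unwind the definition of ``descriptively close'' to the singleton-form of the relation $\delta_{\Phi}$. Theorem~\ref{thm:dcl} already identifies $\mbox{cl}_{\Phi}(A)$ with the set of points that are descriptively close to $A$, so the substantive content to verify is the equivalence
\[
x\in \mbox{cl}_{\Phi}(A)\ \Longleftrightarrow\ \{x\}\ \delta_{\Phi}\ A,
\]
after which the corollary will follow immediately from the theorem.

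To establish this equivalence, first I would appeal to the given definition $\mbox{cl}_{\Phi}(A)=\{x\in X:\Phi(x)\in\mathcal{Q}(\mbox{cl}(A))\}$. Then, using the working definition of $\delta_{\Phi}$ in terms of descriptive intersection, the relation $\{x\}\ \delta_{\Phi}\ A$ unwinds to the assertion that there exists $y\in \{x\}\cup A$ with $\Phi(y)\in\mathcal{Q}(\mbox{cl}(\{x\}))$ and $\Phi(y)\in\mathcal{Q}(\mbox{cl}(A))$. Since $\mbox{cl}(\{x\})\supseteq\{x\}$, we have $\mathcal{Q}(\mbox{cl}(\{x\}))\supseteq\{\Phi(x)\}$, so the forward direction ($x\in\mbox{cl}_{\Phi}(A)\Rightarrow\{x\}\ \delta_{\Phi}\ A$) is obtained simply by taking $y=x$.

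For the reverse direction, suppose $\{x\}\ \delta_{\Phi}\ A$, so some $y$ witnesses nonempty descriptive intersection. Because $\Phi(y)\in\mathcal{Q}(\mbox{cl}(\{x\}))$ forces $\Phi(y)=\Phi(x)$ (the only feature vector available in the singleton's closure), one concludes $\Phi(x)=\Phi(y)\in\mathcal{Q}(\mbox{cl}(A))$, i.e.\ $x\in\mbox{cl}_{\Phi}(A)$. Combining the two directions with Theorem~\ref{thm:dcl} gives the corollary.

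The argument is essentially bookkeeping, so the only real obstacle is making sure the singleton closure is handled correctly: one must use that $\mbox{cl}(\{x\})$ contains $x$ (true in any proximity space by $A\subseteq\mbox{cl}(A)$) and that the matching feature vector condition in the descriptive intersection collapses on the singleton side to $\Phi(y)=\Phi(x)$. No additional machinery beyond the definitions already in the excerpt is required.
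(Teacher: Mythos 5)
Your proposal is correct and follows essentially the same route as the paper: the paper's entire proof is the one-line appeal to Theorem~\ref{thm:dcl}, asserting $\mbox{cl}_{\Phi}(A) = \{x\in X: \{x\}\ \delta_{\Phi}\ A\}$, which is exactly your central equivalence. The definitional unwinding you supply (taking $y=x$ as the witness, and collapsing $\mathcal{Q}(\mbox{cl}(\{x\}))$ to $\{\Phi(x)\}$) is just the bookkeeping the paper leaves implicit, and it checks out in the paper's metric-closure setting where singletons are closed.
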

\begin{proof}
From Theorem~\ref{thm:dcl}, $\mbox{cl}_{\Phi}(A) = \left\{x\in X: \left\{x\right\}\ \delta_{\Phi}\ A\right\}$.
\end{proof}

\section{Descriptive Proximal Groupoids}
In the descriptively near case, we can consider either partial groupoids
within the same set or partial groupoids that belong to disjoint sets. That
is, the descriptive nearness of groupoids is not limited to partial
groupoids that have elements in common. Implicitly, this suggests a new form
of the connectionist paradigm, where sets are not spatially connected (sets
with no elements in common) but are descriptively connected (sets have
elements belong to the descriptive intersection of the sets).  
For more about this new form of connectedness, see, {\em e.g.},~\cite{PetersGuadagni2016connectedness}.

Recall that a partial binary operation on a set $X$ is a mapping of a nonempty
subset of $X\times X$ into $X$. A partial groupoid is a system $S\left(
\ast \right) $ consisting of a nonempty set $S$ together with a partial
binary operation ``$\ast$'' on $S$. Let
\begin{equation*}
\mathcal{Q}\left( A\right) =\left\{ \Phi \left( a\right) :a\in A\right\},
\text{ set of descriptions of members of }A.
\end{equation*}
Next, to arrive at a partial descriptive groupoid, let ``$\ast _{\Phi }$'' be a
partial descriptive binary operation defined by
\begin{equation*}
\ast _{\Phi }:\mathcal{Q}(S)\times \mathcal{Q}(S)\longrightarrow \mathcal{Q}(S).
\end{equation*}

\section{Neighbourliness in Proximal Groupoids}
In general, given a finite groupoid $A(\circ)$ represented by an undirected graph, for $x,y\in A$  let $x\sim y$ denote the fact that $x,y$ are neighbours.  In particular, given a finite descriptive proximal groupoid $A(\circ_{\Phi})$, for $\Phi(x),\Phi(y)\in \mathcal{Q}(A)$,  let $\Phi(x)\sim \Phi(y)$ denote the fact that $\Phi(x),\Phi(y)$ are neighbours, {\em i.e.}, points $x,y\in A$ have matching descriptions.

\begin{example}\rm
Let $(X,\delta_{\Phi})$ be a descriptive proximity space.   Then let $\left\{\Phi(x)\right\}$ denote an equivalence class represented by the feature vector $\Phi(x)$ that describes $x\in X$.  Then $\Phi(y)\in \left\{\Phi(x)\right\}$, provided $\Phi(y) = \Phi(x)$, {\em i.e.}, $x,y\in X$ have matching descriptions.  Then, for example, define the descriptive groupoid $A(\circ_{\Phi})$ such that
\[
A = \left\{\Phi(x)\in \mathcal{Q}(X): \Phi(x)\in  \left\{\Phi(x)\right\}\right\}.
\]
and
\[
\circ_{\Phi}(\Phi(x),\Phi(y)) = \Phi(x).
\]
Then $\Phi(x),\Phi(y)\in \mathcal{Q}(A)$ are neighbourly in groupoid $A$, provided $\Phi(x)$ and $\Phi(y)$ belong to the same equivalence class, {\em i.e.}, $\Phi(x)\in  \left\{\Phi(y)\right\}$. 
\qquad \textcolor{blue}{$\blacksquare$}
\end{example}

The notion of neighbourliness of elements in a groupoid extends to neighbourliness between disjoint groupoids.  Let $X,Y$ be disjoint descriptive proximity spaces and let $A(\circ_{\Phi}),B(\bullet_{\Phi})$ be descriptive proximal groupoids such that $A\subset X, B\subset Y$.  Then groupoids $A(\circ_{\Phi})$ and $B(\bullet_{\Phi})$ are neighbourly, provided $a\sim b$ for some $a\in A, b\in B$.  This leads to the following result.

\begin{theorem}\label{thm:nbdPattern}
Let $(X,\delta_{\Phi}),(Y,\delta_{\Phi})$ be descriptive proximity spaces, $A\subset X, B\subset Y$.  If groupoids $A(\circ_{\Phi}),B(\bullet_{\Phi})$ are neighbourly, then $A\ \delta_{\Phi}\ B$.
\end{theorem}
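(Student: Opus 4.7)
The plan is to unpack the two definitions in play, \emph{neighbourliness of groupoids} and \emph{descriptive proximity}, and observe that one directly supplies a witness to the other. The neighbourliness hypothesis for the disjoint groupoids $A(\circ_{\Phi})$ and $B(\bullet_{\Phi})$ provides elements $a\in A$ and $b\in B$ with $a\sim b$, meaning $\Phi(a)=\Phi(b)$. My job is to promote this pair of matching descriptions into an element of $\mbox{cl}(A)\ \dcap\ \mbox{cl}(B)$, since $A\ \dnear\ B$ is defined exactly by the nonemptiness of that descriptive intersection.

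First I would note that any set in a proximity space is contained in its own closure, so $a\in \mbox{cl}(A)$ and $b\in \mbox{cl}(B)$. Consequently $\Phi(a)\in \mathcal{Q}(\mbox{cl}(A))$ and $\Phi(b)\in \mathcal{Q}(\mbox{cl}(B))$. Next, invoking $\Phi(a)=\Phi(b)$, I would observe that the feature vector $\Phi(a)$ lies simultaneously in $\mathcal{Q}(\mbox{cl}(A))$ and in $\mathcal{Q}(\mbox{cl}(B))$. Taking the witness point $x:=a$, which belongs to $\mbox{cl}(A)\cup \mbox{cl}(B)$, the definition
\[
\mbox{cl}(A)\ \dcap\ \mbox{cl}(B)=\left\{x\in \mbox{cl}(A)\cup \mbox{cl}(B):\Phi(x)\in \mathcal{Q}(\mbox{cl}(A))\ \mbox{and}\ \Phi(x)\in \mathcal{Q}(\mbox{cl}(B))\right\}
\]
is satisfied by $x=a$, so this descriptive intersection is nonempty. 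By the defining equivalence of $\dnear$, this yields $A\ \dnear\ B$, which is what the theorem asserts.

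The only conceptual friction is that the statement allows $A$ and $B$ to live in two different ambient spaces $X$ and $Y$, whereas the earlier definitions of $\dnear$ and $\dcap$ were phrased inside a single space. I would address this by treating the relation $\dnear$ between $A$ and $B$ as determined entirely by the descriptions $\Phi(A)$ and $\Phi(B)$ in the common feature codomain $\mathbb{R}^n$; since the axioms $(\mathbf{dP0})$--$(\mathbf{dP4})$ only ever probe feature vectors and their matches, the proof above transports verbatim to the disjoint-ambient setting. This is the only step that requires a small remark rather than a calculation; every other step is immediate from the definitions.
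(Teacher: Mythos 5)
Your proof is correct and is essentially the paper's own argument: the paper simply declares the result ``immediate from the definition of descriptive near sets,'' and you have written out precisely that immediate unpacking, producing the matching descriptions $\Phi(a)=\Phi(b)$ as a witness to $\mathcal{Q}(\mbox{cl}(A))\cap\mathcal{Q}(\mbox{cl}(B))\neq\emptyset$. Your closing remark about the two ambient spaces $X$ and $Y$ is a sensible caveat the paper leaves unaddressed, but it does not change the substance of the argument.
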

\begin{proof}
Immediate from the definition of descriptive near sets~\cite{Peters2013mcsintro}.
\end{proof}

\begin{figure}[!ht] 
\begin{center}
\begin{pspicture}
(0,0.5)(0,3.5)
  \includegraphics[width=80mm]{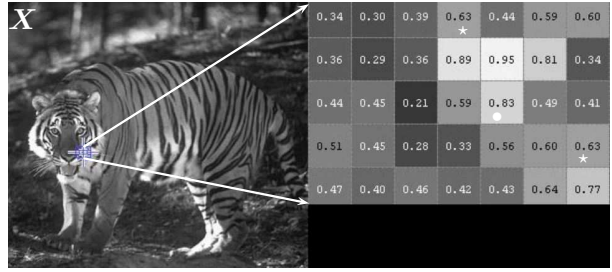}
	\psline[linecolor=white,linewidth=0.30mm]{->}(-7.0,1.6)(-4.0,3.5)
	\psline[linecolor=white,linewidth=0.30mm]{->}(-7.0,1.45)(-4.0,0.85)
	\rput(-7.8,3.3){\large \textcolor{white}{$\boldsymbol{X}$}}
	\rput(-1.95,3.15){\footnotesize \textcolor{white}{$\boldsymbol{\star}$}}
	\rput(-1.5,2.0){\footnotesize \textcolor{white}{$\boldsymbol{\bullet}$}}
	\rput(-0.35,1.45){\footnotesize \textcolor{white}{$\boldsymbol{\star}$}}
\end{pspicture}
\end{center}
\caption{Image $X$\ $\longrightarrow$\ $A(\circ)$}\label{fig:tiger1}
\end{figure}

\begin{figure}[!ht] 
\begin{center}
\begin{pspicture}
(0,0.5)(0,3.5)
  \includegraphics[width=98mm]{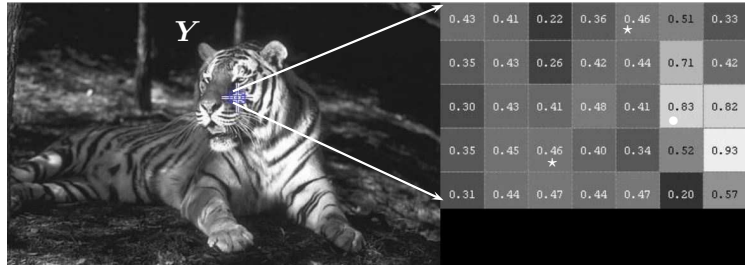}
	\psline[linecolor=white,linewidth=0.30mm]{->}(-6.8,2.3)(-4.0,3.45)
	\psline[linecolor=white,linewidth=0.30mm]{->}(-6.8,2.15)(-4.0,0.85)
	\rput(-7.4,3.1){\large \textcolor{white}{$\boldsymbol{Y}$}}
	\rput(-1.55,3.12){\footnotesize \textcolor{white}{$\boldsymbol{\star}$}}
	\rput(-0.94,1.91){\footnotesize \textcolor{white}{$\boldsymbol{\bullet}$}}
	\rput(-2.55,1.35){\footnotesize \textcolor{white}{$\boldsymbol{\star}$}}
\end{pspicture}
\end{center}
\caption{Image $Y$\ $\longrightarrow$\ $B(\circ)$}\label{fig:tiger2}
\end{figure}

\section{Application: Digital Proximal Groupoids}
A \emph{digital groupoid} is a groupoid in a digital image.
Let $X$ be a digital image and let $A$ be a nonempty subset in $X$ endowed with a binary operation $\circ$. In that case, $A(\circ)$ is a digital groupoid.  

\begin{example}\label{ex:op} Let $X$ be a greyscale digital image and let $A$ be a subimage in $X$.   Let $x,y$ be pixels in the submimage $A$ in Fig.~\ref{fig:tiger1} and let $\phi:X\rightarrow\mathbb{R}$ be defined by $\phi(x) =$ greyscale intensity of $x\in X$.  Further, define the set $\Phi = \left\{\phi\right\}$.  Hence, $\mathcal{Q}(A)$ is the set of feature vectors $\Phi(x)$, where each feature vector contains a single number $\phi(x)$, which is the intensity of $x\in A$.  Then define the binary operation $\circ:\mathcal{Q}(A)\times\mathcal{Q}(A)\rightarrow \mathcal{Q}(A)$ by
\[
\Phi(x) \circ \Phi(y) = min\left\{\Phi(x),\Phi(y)\right\}, \mbox{where}\ \Phi(x),\Phi(y)\ \mbox{are pixel intensities}.
\]
From this, we obtain the digital groupoid $\mathcal{Q}(A)(\circ)$.
\qquad \textcolor{blue}{$\blacksquare$}
\end{example}

\subsection{Neighbourliness of digital groupoid elements}

Let $X$ be a digital image shown in Fig.~\ref{fig:tiger1} and select the subset of pixel intensities $A\subset X$.   Let $\Phi$ be a set of probe functions that represent features of pixels in $X$ and let $\mathcal{Q}(A)$ be sets of feature vectors that describe $a\in A$. Let the binary operation $\circ$ to be that given in Example~\ref{ex:op}.  Hence, $A(\circ)$ is a digital groupoid.  Next, define the pseudometric $d:\mathcal{Q}(A)\times \mathcal{Q}(A)\rightarrow\mathbb{R}$ to be
\[
d(\Phi(x), \Phi(y)) = \abs{\Phi(x) - \Phi(y)}: \Phi(x),\Phi(y)\in \mathcal{Q}(A).
\]
Points $x,y\in A$ are neighbourly, provided $d(\Phi(x), \Phi(y)) = 0$.
\begin{example}\label{ex:neighbourly}
Let $X$ be the digital image shown in Fig.~\ref{fig:tiger1} and select the subset of pixel intensities $A\subset X$ (also shown in Fig.~\ref{fig:tiger1}).  Let $\Phi = \left\{\phi\right\}$, where $\phi(x)$ equals the greyscale intensity of $x\in A$.  Let $\Phi(x), \Phi(y)$ be the pixel intensities labelled with a white $\star$ in Fig.~\ref{fig:tiger1}. Observe that $\Phi(x) = \Phi(y) = 0.63$.  Hence, $\Phi(x), \Phi(y)$ are neighbourly elements of this digital groupoid.  There are other instances of neighbourly elements in the same groupoid.
\qquad \textcolor{blue}{$\blacksquare$}
\end{example}

\begin{figure}[!ht]
\begin{center}
 \subfigure[$X\longrightarrow A2(\circ)$]{
  \includegraphics[width=57 mm]{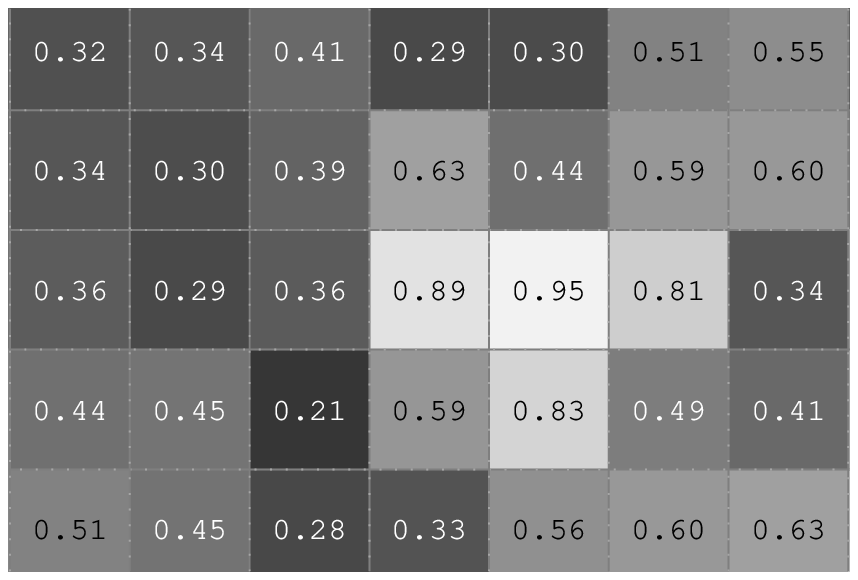}\label{fig:A2}}
 \subfigure[$X\longrightarrow A3(\circ)$]{
  \includegraphics[width=57 mm]{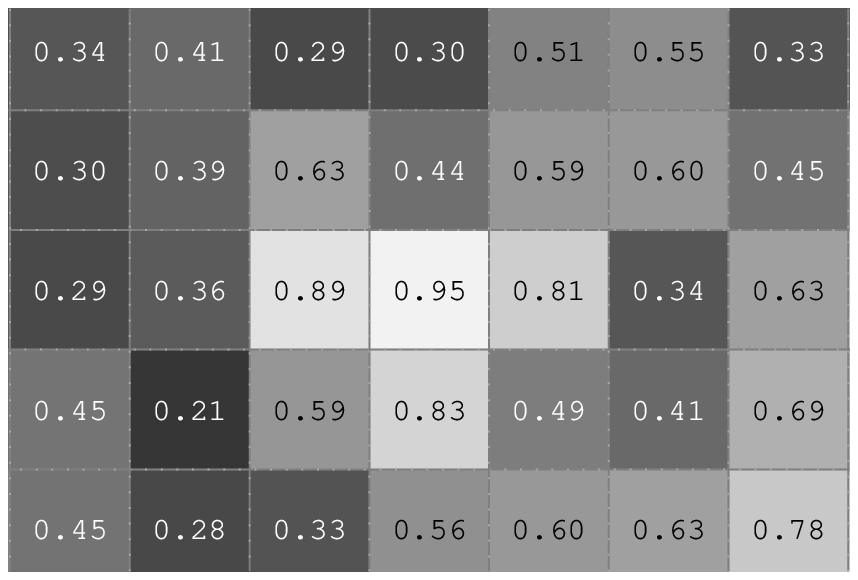}\label{fig:A3}}
 \caption{Neighbours of $A(\circ)$ in~Fig.~\ref{fig:tiger1}}\label{fig:nbdA}
\end{center}
\end{figure}

\subsection{Digital groupoid set pattern generators}
A \emph{pattern generator} is derived some form of regular structure~\cite{Grenander1993}.  In geometry, a regular polygon is a polygon with $n$ sides that have the same length and are symmetrically place around a common center.  A geometric pattern contains a repetition of a regular polygon.  An element in a groupoid $A(\circ)$ is regular, provided $a\in aAa$, {\em i.e.},  $a\circ y\circ a = a$ for some $y\in A$~\cite{Clifford1964}.  A groupoid is regular, provided every element is regular.  
\begin{example}
From example~\ref{ex:neighbourly}, let $\Phi(x), \Phi(y)\in \mathcal{Q}(A)$ be neighbourly elements of groupoid A.  Then
\[
\Phi(x)\circ \Phi(y)\circ \Phi(x) = \Phi(y)\circ \Phi(x) = \Phi(x),\ \mbox{since}, \Phi(x) = \Phi(x).
\]
Hence, $\Phi(x)$ is a regular element.
\end{example}
An algebraic pattern contains a repetition of regular structures such as a regular element or a regular groupoid. Again, for example, an algebraic pattern contains a repetition of neighbourly elements from groupoids such as the one in Example~\ref{ex:neighbourly}.  In this work, the focus is proximal algebraic patterns.  A \emph{proximal algebraic pattern} in a descriptive proximity space $(X,\delta_{\Phi})$ (denoted by $\mathfrak{P}(A)$) is derived from a repetition of structures such as groupoids $A$ that contain a mixture of neighbourly (regular) and non-neighbourly (non-regular) elements, {\em i.e.},
\begin{align*}
\mathfrak{P}_{\Phi}(A(\circ)) = &\{B(\bullet): B\in 2^{\mathcal{Q}(X)}\ \mbox{and}\ \Phi(a) = \Phi(b),\\
                         &\mbox{for some},\ \Phi(a)\in \mathcal{Q}(A), \Phi(b)\in \mathcal{Q}(B)\}.
\end{align*}
  
\begin{example}\label{ex:neighbourlyPatterns}
From example~\ref{ex:neighbourly}, groupoid $(\mathcal{Q}(A),\circ)$.  From the digital image in Fig.~\ref{fig:tiger1}, we can find additional groupoids containing neighbourly elements.  Two such groupoids are given in Fig.~\ref{fig:nbdA}.  That is, the proximal groupoid $A2(\circ)$ given in Fig.~\ref{fig:A2} is derived from the digital image in Fig.~\ref{fig:tiger1} and has a number of neighbourly elements such as those pixels with intensities 0.36, 0.41, 0.44, 0.59.  And $A2(\circ)$ and groupoid $A(\circ)$ (in Fig.~\ref{fig:tiger1}) are neighbourly ({\em e.g.}, both $A$ and $A2$ have pixels with intensity 0.83).   Similarly, the proximal groupoid $A3(\circ)$ given in Fig.~\ref{fig:A3} is also derived from the digital image in Fig.~\ref{fig:tiger1} and has a number of neighbourly elements.  Also notice that groupoid $A3$ also a pixel with intensity 0.83.  Then, from Theorem~\ref{thm:nbdPattern} and the definition of a proximal algebraic pattern, we obtain
\[
\mathfrak{P}_{\Phi}(A) = \left\{A,A2,A3\right\}.
\]
Similarly, proximal groupoids $B2,B3$ from Fig.~\ref{fig:nbdB} are derived from the digital image in Fig.~\ref{fig:tiger2}.  In addition, $B2,B3$ and the proximal groupoid $B$ in Fig.~\ref{fig:tiger2} are neighbourly.  Hence, we obtain a second proximal algebraic pattern, namely,
\[
\mathfrak{P}_{\Phi}(B) = \left\{B,B2,B3\right\}.
\] 
Patterns $\mathfrak{P}_{\Phi}(A),\mathfrak{P}_{\Phi}(B)$ are examples of neighbourly patterns. \qquad \textcolor{blue}{$\blacksquare$}
\end{example}

\begin{theorem}~\label{thm:generator}
In a descriptive proximity space, a proximal groupoid containing regular elements generates 
a proximal groupoid pattern.
\end{theorem}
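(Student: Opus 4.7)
The plan is to unpack the definitions carefully and combine the preceding Example with Theorem~\ref{thm:nbdPattern}. Recall that a proximal algebraic pattern $\mathfrak{P}_{\Phi}(A(\circ))$ is the collection of descriptive groupoids $B(\bullet)$ such that some $\Phi(a)\in \mathcal{Q}(A)$ matches some $\Phi(b)\in \mathcal{Q}(B)$. So there are really two things to verify: that $\mathfrak{P}_{\Phi}(A(\circ))$ is nonempty and well-defined, and that each of its members is descriptively near $A$.

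First I would exploit the regularity hypothesis. Let $\Phi(a)\in \mathcal{Q}(A)$ be a regular element of the descriptive groupoid $A(\circ_{\Phi})$; by definition there is some $\Phi(y)\in \mathcal{Q}(A)$ such that
\[
\Phi(a)\,\circ_{\Phi}\,\Phi(y)\,\circ_{\Phi}\,\Phi(a) \;=\; \Phi(a).
\]
As in the Example immediately preceding the theorem, once the partial operation $\circ_{\Phi}$ is a selection-type map (such as the $\min$ operation used throughout the paper), this identity forces the matching $\Phi(a)=\Phi(y)$, so that $\Phi(a)$ and $\Phi(y)$ are neighbourly elements of $A$. Hence regularity supplies a distinguished description $\Phi(a)$ that witnesses neighbourliness inside the groupoid.

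Second, I would use this witness to build the pattern. The groupoid $A(\circ_{\Phi})$ itself belongs to $\mathfrak{P}_{\Phi}(A(\circ_{\Phi}))$ via the trivial match $\Phi(a)=\Phi(a)$, so the set is nonempty. More substantively, any descriptive groupoid $B(\bullet_{\Phi})$ in the ambient space that contains a point $\Phi(b)$ with $\Phi(b)=\Phi(a)$ belongs to $\mathfrak{P}_{\Phi}(A(\circ_{\Phi}))$ by definition, and Theorem~\ref{thm:nbdPattern} then yields $A\ \delta_{\Phi}\ B$. Therefore every member of the generated family is descriptively near $A$, which is exactly the structural property that makes $\mathfrak{P}_{\Phi}(A(\circ_{\Phi}))$ deserve the name ``proximal groupoid pattern.''

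The principal obstacle is interpretational rather than technical: the theorem does not spell out what ``generates'' means, and one must be careful not to land on a vacuous reading in which only $A$ itself populates $\mathfrak{P}_{\Phi}(A)$. The right reading is that regularity produces a concrete matched pair $(\Phi(a),\Phi(y))$ inside $A$, and this pair furnishes the common description used to recognize all other groupoids in the pattern. I would make this explicit at the start of the proof, then carry out the two steps above — extract the neighbourly pair from regularity, and invoke Theorem~\ref{thm:nbdPattern} to guarantee descriptive nearness — to close the argument.
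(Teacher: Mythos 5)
Your proposal follows essentially the same route as the paper's proof: membership of $A$ in its own pattern via the trivial self-match, plus Theorem~\ref{thm:nbdPattern} to show that every other groupoid admitted into $\mathfrak{P}_{\Phi}(A)$ is descriptively near $A$. Indeed you are more careful than the paper, whose proof is a two-line case split (either $A$ is the only neighbourly groupoid, so $\mathfrak{P}(A)=\{A\}$, or there is a neighbourly partner $B$ and Theorem~\ref{thm:nbdPattern} applies) and which silently substitutes ``neighbourly'' for the hypothesis ``regular'' without comment. The one step of yours I would push back on is the claim that regularity \emph{forces} $\Phi(a)=\Phi(y)$ for a selection-type operation: under the $\min$ operation of Example~\ref{ex:op}, $\min\{\min\{\Phi(a),\Phi(y)\},\Phi(a)\}=\Phi(a)$ requires only $\Phi(a)\le\Phi(y)$, and under the left-projection operation of the earlier example every element is regular vacuously, so in neither of the paper's concrete groupoids does regularity produce a genuinely matched pair. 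Fortunately that step is not load-bearing: the trivial match $\Phi(a)=\Phi(a)$ already places $A$ in $\mathfrak{P}_{\Phi}(A)$ and any $B$ sharing that description is caught by Theorem~\ref{thm:nbdPattern}, which is all the paper's own argument uses. So your proof is sound once you drop (or weaken to ``regularity supplies a distinguished description $\Phi(a)$, which matches itself'') the claimed equivalence between regular and neighbourly elements.
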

\begin{proof}
Assume $A$ in a descriptive proximity space $X$ is the only proximal groupoid containing neighbourly elements.  Then $\mathfrak{P}(A) =\left\{A\right\}$ is a proximal groupoid pattern containing only the groupoid $A$.  Let $A,B$ be a pair of neighbourly proximal groupoids in $X$.  Then,  
from Theorem~\ref{thm:nbdPattern}, $A\ \delta{\Phi}\ B$ and either $A$ or $B$ is a generator of a proximal groupoid pattern.
\end{proof}

\begin{theorem}~\label{thm:groupoidPattern}
In a descriptive proximity space, a proximal groupoid pattern is a collection of neighbourly groupoids.
\end{theorem}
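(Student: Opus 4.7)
The plan is to unpack the definition of $\mathfrak{P}_{\Phi}(A(\circ))$ and to invoke Theorem~\ref{thm:nbdPattern}. I would fix an arbitrary member $B(\bullet)$ of the pattern and argue that $B(\bullet)$ is neighbourly with the generator $A(\circ)$; since $B$ is arbitrary, this suffices to present the whole pattern as a collection of neighbourly groupoids.

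Concretely, the defining condition of $\mathfrak{P}_{\Phi}(A(\circ))$ introduced just before Example~\ref{ex:neighbourlyPatterns} says that $B(\bullet)\in \mathfrak{P}_{\Phi}(A(\circ))$ supplies a pair $\Phi(a)\in \mathcal{Q}(A)$, $\Phi(b)\in \mathcal{Q}(B)$ with $\Phi(a)=\Phi(b)$. This matching-descriptions equality is precisely the condition $a\sim b$ that defines neighbourliness of two elements, and hence (by the definition just before Theorem~\ref{thm:nbdPattern}) it is the condition for the two groupoids $A(\circ)$ and $B(\bullet)$ themselves to be neighbourly. Thus each $B(\bullet)$ in the pattern is neighbourly with $A(\circ)$, and Theorem~\ref{thm:nbdPattern} additionally yields $A\ \delta_{\Phi}\ B$. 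Since $A(\circ)$ is trivially neighbourly with itself (every feature vector matches itself), the collection $\mathfrak{P}_{\Phi}(A(\circ))$ is, as claimed, a family of groupoids each of which stands in the neighbourliness relation with $A(\circ)$.

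The only point worth flagging, rather than a genuine obstacle, is the distinction between ``neighbourliness with the generator $A$'' and ``pairwise neighbourliness between arbitrary members'' of $\mathfrak{P}_{\Phi}(A(\circ))$. The former follows directly from the pattern definition, as above; the latter is not guaranteed by the definition alone, since two members might each match $A$ through different features, and would require an additional appeal to a transitivity-style axiom such as \textbf{(dP4)} applied to singletons of matching descriptions. Because the theorem as stated only asserts that the pattern \emph{is} a collection of neighbourly groupoids, I would read it in the weaker generator-centred sense, for which the proof reduces to a one-line reformulation of the defining condition via Theorem~\ref{thm:nbdPattern}.
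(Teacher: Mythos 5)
Your proposal is correct and follows essentially the same route as the paper, whose proof simply declares the result ``immediate from Theorem~\ref{thm:generator} and the definition of a proximal algebraic pattern'' --- you unpack exactly that definition and reduce neighbourliness of each member with the generator to the matching-description condition behind Theorem~\ref{thm:nbdPattern}, which is also what Theorem~\ref{thm:generator} rests on. Your remark distinguishing generator-centred from pairwise neighbourliness is a genuine point the paper glosses over, and your generator-centred reading is the one consistent with the paper's definitions.
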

\begin{proof}
Immediate from Theorem~\ref{thm:generator} and the definition of a proximal algebraic pattern.
\end{proof}

The likelihood of finding proximal algebraic patterns in nature is high, since natural patterns seldom, if ever, contains regular structures such as regular polygons or regular algebraic structures.  Proximal algebraic patterns are commonly found in digital images.  Hence, proximal algebraic structures are useful in classifying digital images.

\begin{figure}[!ht]
\begin{center}
 \subfigure[$X\longrightarrow B2(\circ)$]{
  \includegraphics[width=57 mm]{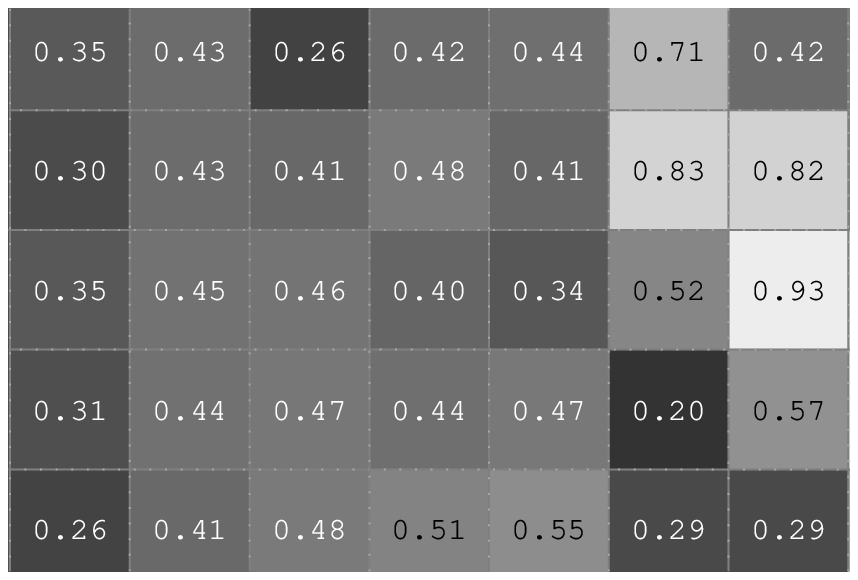}\label{fig:B2}}
 \subfigure[$X\longrightarrow B3(\circ)$]{
  \includegraphics[width=57 mm]{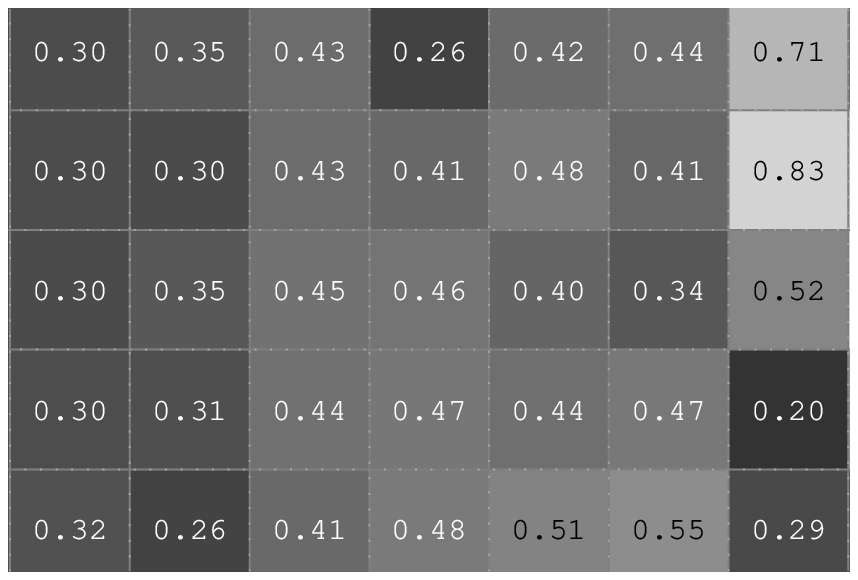}\label{fig:B3}}
 \caption{Neighbours of $B(\circ)$ in~Fig.~\ref{fig:tiger2}}\label{fig:nbdB}
\end{center}
\end{figure}

\subsection{Neighbourly patterns in digital images}
Let $X,Y$ be descriptive proximity spaces and let $A\subset X, B\subset Y$ be proximal groupoids that generate the proximal algebraic patterns $\mathfrak{P}(A),\mathfrak{P}(B)$ in $X,Y$, respectively.  A pair of patterns $\mathfrak{P}(A),\mathfrak{P}(B)$ are neighbourly, provided $A\ \delta{\Phi}\ B$, {\em i.e.}, the description $\Phi(a)$ matches $\Phi(a)$ for some $a\in A, b\in B$.  In other words, \emph{neighbourly proximal algebraic patterns} contain neighbourly sets.

\begin{example}
In Example~\ref{ex:neighbourlyPatterns}, patterns $\mathfrak{P}_{\Phi}(A),\mathfrak{P}_{\Phi}(B)$ are neighbourly.  To see this, consider, for instance, groupoid $A$ in Fig.~\ref{fig:tiger1} has a pixel with intensity 0.83 (labelled with a $\star$ in Fig.~\ref{fig:tiger1}).  Similarly, groupoid $B$ in Fig.~\ref{fig:tiger2} has a pixel with intensity 0.83 (also labelled with a $\star$).  Consequently, $A$ and $B$ are neighbourly.  Hence, groupoids $A, B$ generate neighbourly patterns. \qquad \textcolor{blue}{$\blacksquare$}
\end{example} 

In the search for neighbourly patterns in digital images $X,Y$, a proximal algebraic pattern $\mathfrak{P}_Y(B)$ in $Y$ is salient (important) relative to a proximal algebraic pattern $\mathfrak{P}_X(A)$ in $X$, provided the number of neighbourly elements $a\in A, b\in B$ is high enough.  A digital image $Y$ belongs to the class of images represented by $X$, if $Y$ contains a salient pattern that is neighbourly with a pattern in $X$.  In that case, images $X,Y$ are well-classified.

\begin{theorem}
Let $(X,\delta_{\Phi}),(Y,\delta_{\Phi})$ be descriptive proximity spaces and let $A\subset X, B\subset Y$ generate neighbourly patterns $\mathfrak{P}_{\Phi}(\mbox{cl}A),\mathfrak{P}_{\Phi}(\mbox{cl}B)$.  If every $b\in B$ is neighbourly with some $a\in A$, then $\mathfrak{P}_{\Phi}(\mbox{cl}B)$ is a salient proximal algebraic pattern. 
\end{theorem}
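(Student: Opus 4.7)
The plan is to directly unpack the definition of saliency given just before the statement and show that the hypothesis delivers the strongest possible form of neighbourliness between $B$ and $A$. First I would translate the assumption ``every $b\in B$ is neighbourly with some $a\in A$'' into the description-matching statement that for each $b\in B$ there is $a_b\in A$ with $\Phi(a_b)=\Phi(b)$. This exhibits at least $\abs{B}$ witnesses to the neighbourly relation between elements of $A$ and $B$, with no element of $B$ failing to be matched.

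Next I would invoke Theorem~\ref{thm:nbdPattern} to obtain $A\ \dnear\ B$, since matching descriptions of points in the groupoids $A(\circ_{\Phi})$ and $B(\bullet_{\Phi})$ make these groupoids neighbourly. Theorem~\ref{thm:groupoidPattern} then guarantees that $\mathfrak{P}_{\Phi}(\mbox{cl}A)$ and $\mathfrak{P}_{\Phi}(\mbox{cl}B)$ are bona fide patterns, each being a collection of neighbourly groupoids whose representative sets $\mbox{cl}A,\mbox{cl}B$ contain $A,B$ respectively. Combined with the description-matching from the hypothesis, each point of $B$ lies in $\mbox{cl}B\ \dcap\ \mbox{cl}A$, so the count of neighbourly pairs across the two patterns is exactly $\abs{B}$, the maximum possible relative to $B$.

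Finally, I would appeal to the saliency criterion: a pattern $\mathfrak{P}_{\Phi}(\mbox{cl}B)$ is salient relative to $\mathfrak{P}_{\Phi}(\mbox{cl}A)$ provided the number of neighbourly elements $a\in A,b\in B$ is high enough. Since the hypothesis forces every $b\in B$ to contribute such a pair, this count attains its upper bound and therefore meets or exceeds any admissible threshold. Hence $\mathfrak{P}_{\Phi}(\mbox{cl}B)$ qualifies as a salient proximal algebraic pattern, completing the argument.

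The main obstacle is that the phrase ``high enough'' is left informal in the preceding definition of saliency, so the proof must essentially read saliency in its strongest form: full matching of $B$ into $A$ renders the requirement automatic. If a quantitative threshold were imposed, one would additionally need to verify that $\abs{B}$ meets or exceeds it, but under the blanket hypothesis of the theorem this reduces to the one-line observation above.
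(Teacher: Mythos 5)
Your proposal is correct and follows essentially the same route as the paper, whose entire proof is ``Immediate from the definition of a salient proximal algebraic pattern''; you simply spell out why the hypothesis makes the (informal) ``high enough'' threshold automatic, since every $b\in B$ contributes a matching pair. The extra appeals to Theorem~\ref{thm:nbdPattern} and Theorem~\ref{thm:groupoidPattern} are harmless scaffolding, and your closing caveat about the informality of the saliency criterion accurately identifies the only real looseness, which lies in the definition rather than in your argument.
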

\begin{proof}
Immediate from the definition of a salient proximal algebraic pattern.
\end{proof}

\bibliographystyle{amsplain}
\bibliography{mybibfile}

\end{document}